\documentclass[letterpaper]{article} 
\usepackage[preprint]{aaai2027}
\usepackage[hyphens]{url}  
\usepackage{graphicx} 
\usepackage{natbib}  
\usepackage{caption} 
\usepackage{algorithm}
\usepackage{algorithmic}
\usepackage{booktabs}
\usepackage{amsmath,amssymb,amsfonts}
\usepackage{amsthm}
\newtheorem{proposition}{Proposition}
\newtheorem{lemma}{Lemma}
\usepackage{multirow}
\usepackage[table]{xcolor}
\usepackage{array}
\usepackage{tikz}
\usetikzlibrary{arrows.meta}
\definecolor{tpInk}{HTML}{293B49}
\definecolor{tpMuted}{HTML}{61717C}
\definecolor{tpLine}{HTML}{BCC9D1}
\definecolor{tpBlue}{HTML}{477FA6}
\definecolor{tpBlueLight}{HTML}{EAF2F8}
\definecolor{tpRed}{HTML}{B75249}
\definecolor{tpRedLight}{HTML}{FBEEEA}
\definecolor{tpGreen}{HTML}{388575}
\definecolor{tpGreenLight}{HTML}{E8F3ED}
\definecolor{tpPurple}{HTML}{8071A3}
\definecolor{tpPurpleLight}{HTML}{F0EDF7}
\definecolor{tpGold}{HTML}{B8883B}
\tikzset{
 tptext/.style={font=\sffamily\fontsize{8}{9.2}\selectfont,text=tpInk,align=center},
 tparrow/.style={-{Stealth[length=3.1pt,width=2.7pt]},draw=tpInk,line width=.65pt,rounded corners=2pt},
 tpfeedback/.style={tparrow,draw=tpPurple,line width=.8pt},
 pics/tpmodel/.style args={#1}{code={
  \draw[draw=#1,fill=#1!12,line width=.7pt,rounded corners=2pt] (-10,-6) rectangle (10,8);
  \draw[draw=#1,line width=.7pt] (-10,0)--(-13,0)--(-13,4)--(-10,4) (10,0)--(13,0)--(13,4)--(10,4);
  \draw[draw=#1,line width=.7pt] (0,8)--(0,11);
  \fill[#1] (0,12) circle (1.3pt);
  \draw[draw=#1,fill=white,rounded corners=1pt,line width=.55pt] (-6,-1) rectangle (6,5);
  \fill[#1] (-3,2) circle (.9pt) (3,2) circle (.9pt);
  \draw[draw=#1,line width=.6pt] (-4,-6)--(-4,-9)--(4,-9)--(4,-6);
 }},
 pics/tppool/.style={code={
  \draw[draw=tpLine,fill=tpBlueLight,line width=.6pt,rounded corners=1pt] (-7,-7) rectangle (10,12);
  \draw[draw=tpLine,fill=white,line width=.6pt,rounded corners=1pt] (-10,-10) rectangle (7,9);
  \draw[draw=tpBlue,fill=white,line width=.7pt,rounded corners=1pt] (-13,-13) rectangle (4,6);
  \node[tptext,font=\sffamily\bfseries\fontsize{9}{9}\selectfont,text=tpBlue] at (-4.5,0) {$x$};
  \draw[tpLine,line width=.6pt] (-10,-5)--(1,-5) (-10,-8)--(-2,-8);
 }},
 pics/tpverify/.style={code={
  \draw[draw=tpGreen,fill=tpGreenLight,line width=.7pt,rounded corners=1.5pt] (-8,-10) rectangle (8,10);
  \draw[draw=tpGreen,fill=white,line width=.6pt,rounded corners=.6pt] (-3,8) rectangle (3,12);
  \draw[draw=tpGreen,line width=1pt,line cap=round,line join=round] (-5,1)--(-1,-3)--(5,4);
  \draw[draw=tpGreen!45,line width=.6pt] (-4,-7)--(4,-7);
 }},
 pics/tprank/.style={code={
  \foreach \y/\tpcol/\len in {8/tpRed/12,0/tpRed/9,-8/tpLine/4}{
   \draw[draw=\tpcol,fill=\tpcol!10,line width=.6pt,rounded corners=1pt] (-12,\y-3) rectangle (12,\y+3);
   \fill[\tpcol] (-9,\y-1.1) rectangle (-9+\len,\y+1.1);
  }
 }},
 pics/tpdensity/.style={code={
  \draw[draw=tpPurple!45,fill=white,line width=.65pt,rounded corners=1.5pt] (-15,-12) rectangle (15,13);
  \draw[draw=tpLine,line width=.5pt] (-11,-8)--(11,-8) (-11,-8)--(-11,8);
  \fill[tpPurple!16] (-11,-8) .. controls (-8,-8) and (-7,5) .. (-3,6) .. controls (1,6) and (1,-8) .. (11,-8)--cycle;
  \draw[draw=tpPurple,line width=.9pt] (-11,-8) .. controls (-8,-8) and (-7,5) .. (-3,6) .. controls (1,6) and (1,-8) .. (11,-8);
  \draw[draw=tpBlue,line width=.7pt] (-11,-8) .. controls (-5,-8) and (-4,1) .. (3,2) .. controls (8,2) and (8,-8) .. (11,-8);
 }},
 pics/tpupdate/.style={code={
  \foreach \ya in {-8,0,8}{\foreach \yb in {-5,5}{\draw[tpRed!45,line width=.55pt] (-10,\ya)--(0,\yb);}}
  \foreach \ya in {-5,5}{\foreach \yb in {-8,0,8}{\draw[tpRed!45,line width=.55pt] (0,\ya)--(10,\yb);}}
  \foreach \y in {-8,0,8}{\draw[draw=tpRed,fill=tpRedLight,line width=.65pt] (-10,\y) circle (2pt);\draw[draw=tpRed,fill=tpRedLight,line width=.65pt] (10,\y) circle (2pt);}
  \foreach \y in {-5,5}{\draw[draw=tpRed,fill=tpRed!35,line width=.65pt] (0,\y) circle (2pt);}
 }},
 pics/tpgroup/.style args={#1}{code={
  \draw[draw=tpLine,fill=white,line width=.55pt,rounded corners=1pt] (-8,-11) rectangle (9,11);
  \draw[tpLine,line width=.55pt] (-5,8)--(6,8);
  \foreach \j in {0,...,7}{
   \pgfmathtruncatemacro{\col}{mod(\j,2)}
   \pgfmathtruncatemacro{\row}{floor(\j/2)}
   \begin{scope}[shift={(-4+\col*7,4-\row*3.7)}]
    \ifnum\j<#1\relax
      \draw[draw=tpGreen,line width=.75pt,line cap=round,line join=round] (-1.3,0)--(-.3,-1)--(1.5,1.3);
    \else
      \draw[draw=tpRed!85,line width=.65pt,line cap=round] (-1,-1)--(1,1) (-1,1)--(1,-1);
    \fi
   \end{scope}
  }
 }}
}

\definecolor{cOurs}{RGB}{249,214,210}
\definecolor{cOursLite}{RGB}{250,240,239}
\definecolor{cGroup}{RGB}{200,205,214}

\graphicspath{{figures/}}

\newcommand{\method}{ThinkPrior}

\newcommand{\numLsil}{0.217}
\newcommand{\numLacc}{0.582}
\newcommand{\numLwst}{552}

\newcommand{\ua}{$\uparrow$}
\newcommand{\da}{$\downarrow$}
\newcommand{\up}[1]{\rlap{\textsubscript{\fontsize{6}{6}\selectfont\textcolor{green!55!black}{#1}}}}
\newcommand{\dn}[1]{\rlap{\textsubscript{\fontsize{6}{6}\selectfont\textcolor{red!75!black}{#1}}}}
\newcommand{\best}[1]{\textbf{#1}}

\newif\ifprovhl
\provhlfalse

\newcommand{\lpOurSilentA}{0.121}                  
\newcommand{\lpOurSilentB}{0.397}                  
\newcommand{\lpOurWaste}{5085}                     
\newcommand{\lpOurAcc}{0.596}                      
\newcommand{\lpOurSilentASeeds}{0.150/0.113/0.100}
\newcommand{\lpOurSilentCSeeds}{0.092/0.075/0.067} 

\newcommand{\lpNpSilentA}{0.242}                   
\newcommand{\lpNpSilentASeeds}{0.287/0.175/0.263}
\newcommand{\lpNpSilentCSeeds}{0.250/0.225/0.212}  

\newcommand{\lpNpSilentB}{0.476}                   
\newcommand{\lpNpSilentBSeeds}{0.507/0.477/0.445}
\newcommand{\lpOurSilentBSeeds}{0.455/0.388/0.349}
\newcommand{\lpNpWaste}{6099}                      
\newcommand{\lpNpWasteSeeds}{6488/6112/5696}
\newcommand{\lpOurWasteSeeds}{5824/4960/4472}
\newcommand{\lpNetPct}{16.6\%}                     
\newcommand{\lpNpAcc}{0.411}                       
\newcommand{\lpNpAccSensitivity}{0.604}            
\newcommand{\lpNpAccSeeds}{0.588/0.620/0.024}

\title{\method{}: Zero-Rollout Difficulty Priors\\
for Cold-Start Prompt Selection in RLVR}

\author{
    Tommy Sha\textsuperscript{\rm 1},
    Skylar Zhai\textsuperscript{\rm 2},
    Siqi Zhao\textsuperscript{\rm 2}
}
\affiliations{
    \textsuperscript{\rm 1}Stony Brook University
    \qquad
    \textsuperscript{\rm 2}University of Minnesota Twin Cities\\[-0.2em]
    {\scriptsize\ttfamily
    Correspondence: tianming.sha@stonybrook.edu;
    \{haoti002,zhao2052\}@umn.edu}
}

\begin{document}
\maketitle

\begin{abstract}
GRPO spends generation on response groups whose rewards are all identical, even though these
groups provide no relative reward signal. Prompt selectors can avoid some of this waste by
learning from policy outcomes, but must first collect the history used to make those decisions.
We introduce \method{}, which initializes prompt selection with verified responses from a
smaller external model. A reusable offline pass supplies per-prompt Beta priors; their
closed-form expected learnability scores favor prompts likely to produce mixed rewards.
Target-policy outcomes then update the estimates during training. On Qwen2.5-Math-7B,
\method{} more than halves early silent groups while retaining comparable final accuracy.
Combining it with DAPO reduces policy rollout generation by $10.6\%$ at the same update
budget and mean MATH500 accuracy. These results show how external difficulty information
can reduce the cost of selecting useful training prompts before policy history is available.
\end{abstract}

\section{Introduction}

Reinforcement learning with verifiable rewards (RLVR) has improved the reasoning capabilities
of large language models by learning from automatically checked solutions
\cite{guo2025deepseekr1}. GRPO \cite{shao2024deepseekmath} is a widely used approach in this
setting: it generates several responses to each prompt and updates the policy using their
relative rewards. Repeating this process across many prompts requires substantial generation,
making rollout computation a major cost of training.

\begin{figure}[t]
\centering
\begingroup
\begin{tikzpicture}[x=1pt,y=1pt,every node/.style={tptext}]
\path[use as bounding box] (0,0) rectangle (237,222);
\node[font=\sffamily\bfseries\fontsize{9}{10}\selectfont] at (118,211) {Reward differences provide the learning signal};
\foreach \cx/\title/\successes/\border/\bg in {34/All wrong/0/tpLine/white,118/Mixed rewards/4/tpPurple/tpPurpleLight,202/All correct/8/tpLine/white}{
 \draw[draw=\border,fill=\bg,line width=.75pt,rounded corners=3pt] (\cx-30,124) rectangle (\cx+30,189);
 \node[font=\sffamily\bfseries\fontsize{8}{9}\selectfont] at (\cx,180) {\title};
 \foreach \j in {0,...,7}{
  \pgfmathtruncatemacro{\c}{mod(\j,2)}
  \pgfmathtruncatemacro{\r}{floor(\j/2)}
  \begin{scope}[shift={(\cx-12+24*\c,166-10*\r)}]
   \draw[draw=tpLine!75,fill=white,line width=.4pt,rounded corners=1pt] (-9,-3.6) rectangle (9,3.6);
   \draw[draw=tpLine,line width=.5pt] (-6,1)--(-1,1) (-6,-1)--(-3,-1);
   \ifnum\j<\successes\relax
    \draw[draw=tpGreen,line width=.9pt,line cap=round,line join=round] (2,0)--(3.5,-1.5)--(6.5,1.8);
   \else
    \draw[draw=tpRed,line width=.85pt,line cap=round] (2.5,-1.5)--(5.5,1.5) (2.5,1.5)--(5.5,-1.5);
   \fi
  \end{scope}
 }
}
\node[text=tpMuted,font=\sffamily\fontsize{7.8}{8.5}\selectfont] at (34,115) {No reward contrast};
\node[text=tpPurple,font=\sffamily\bfseries\fontsize{7.8}{8.5}\selectfont] at (118,115) {Reward contrast};
\node[text=tpMuted,font=\sffamily\fontsize{7.8}{8.5}\selectfont] at (202,115) {No reward contrast};
\node[text=tpInk,font=\sffamily\fontsize{8}{9}\selectfont] at (119,96) {Probability of a mixed-reward group};
\draw[draw=tpLine,line width=.45pt,dashed] (34,84)--(202,84);
\path[fill=tpPurple!13] (34,28) -- plot[domain=0:1,samples=81,variable=\x]
 ({34+168*\x},{28+56*(1-pow(\x,8)-pow(1-\x,8))}) -- (202,28) -- cycle;
\draw[draw=tpMuted,line width=.6pt] (34,88)--(34,28)--(207,28);
\draw[draw=tpPurple,line width=1.25pt] plot[domain=0:1,samples=81,variable=\x]
 ({34+168*\x},{28+56*(1-pow(\x,8)-pow(1-\x,8))});
\fill[tpMuted] (34,28) circle (1.9pt) (202,28) circle (1.9pt);
\fill[tpPurple] (118,83.5625) circle (2.4pt);
\node[anchor=east,text=tpMuted,font=\sffamily\fontsize{7.5}{8.5}\selectfont] at (29,28) {0};
\node[anchor=east,text=tpMuted,font=\sffamily\fontsize{7.5}{8.5}\selectfont] at (29,84) {1};
\foreach \xx/\label in {34/0,118/0.5,202/1}{
 \draw[draw=tpMuted,line width=.6pt] (\xx,28)--(\xx,25);
 \node[text=tpMuted,font=\sffamily\fontsize{7.5}{8.5}\selectfont] at (\xx,18) {\label};
}
\node[text=tpMuted,font=\sffamily\fontsize{7.8}{8.5}\selectfont] at (118,6) {Success rate of the current policy};
\end{tikzpicture}
\endgroup
\caption{Which prompts provide a GRPO learning signal? All-wrong and all-correct groups have
uniform rewards and zero relative advantage. Mixed rewards provide a within-group comparison.
The curve gives the probability of mixed rewards for the illustrated group size; the cards
show example outcomes. \method{} estimates this probability before selecting prompts.}
\label{fig:teaser}
\end{figure}

To reduce rollout cost, DAPO \cite{yu2025dapo} retains response groups with varied rewards
and refills the update batch as needed. GRESO \cite{zheng2025greso} uses past rewards to
probabilistically skip prompts before generation, while MoPPS \cite{qu2025mopps} selects
prompts using success-rate posteriors. Predictive selectors can also share observations
across prompts \cite{qu2026gps}.

These methods obtain their selection signals from target-policy rollouts. Filtering pays
for responses that it later discards, while history-based selection must collect observations
before they can guide subsequent choices. At cold start, this acquisition cost is part of
the training budget. When every response to a prompt is correct or every response is
incorrect, the group provides no reward comparison (Figure~\ref{fig:teaser}); we call these
\emph{silent groups}. Early rollouts can therefore inform selection without supplying a
group-relative learning signal. We ask whether an external model can provide useful initial
difficulty information before target-policy generation, especially for short fine-tuning runs.

The quantity to estimate follows from GRPO's normalization. Every non-silent group has the
same squared-advantage sum under exact normalization, so the expected sum is proportional to
the probability of obtaining mixed rewards. We call this probability \emph{learnability}.
An intermediate estimate of the success rate alone is insufficient: a mean near one half
can describe either a prompt known to yield varied outcomes or a poorly known prompt that
may be consistently solved or failed. The selector should therefore average learnability
over its uncertainty about the success rate.

\method{} uses a smaller, off-the-shelf \emph{anchor} to answer the prompt pool and converts
its verified success rates into finite-strength Beta priors. The resulting posterior
expectation has a closed form, enabling prompt ranking from the first training step.
The target policy's subsequent responses provide both the GRPO update and new evidence for
selection (Figure~\ref{fig:framework}). This separates the acquisition of initial difficulty
information from target-policy generation: the anchor pass can be reused across runs, while
online feedback adapts the estimates to each run.

We evaluate whether this initialization improves early rollout utilization and whether its
combination with dynamic sampling reduces generation for a fixed update budget. Our
contributions are threefold:
\begin{itemize}\itemsep2pt
\item We derive an expected-learnability score and establish how reward normalization and
      uncertainty in success rates determine its prompt-selection behavior.
\item We introduce \method{}, which initializes the score with verified external-model
      responses and updates it using training feedback.
\item We demonstrate improved rollout utilization with comparable task performance.
      \method{} more than halves early silent groups and, with DAPO, uses $10.6\%$ fewer
      policy rollouts at the same update budget and mean MATH500 accuracy.
\end{itemize}

\section{Method}

\begin{figure*}[t]
\centering
\begin{tikzpicture}[x=1pt,y=1pt,every node/.style={tptext}]
\path[use as bounding box] (0,0) rectangle (503,136);
\draw[draw=tpBlue!45,fill=tpBlueLight!50,line width=.65pt,rounded corners=3pt] (.5,.5) rectangle (141,135.5);
\draw[draw=tpRed!40,fill=tpRedLight!40,line width=.65pt,rounded corners=3pt] (151,.5) rectangle (502.5,135.5);
\node[anchor=west,text=tpBlue,font=\sffamily\bfseries\fontsize{9}{10}\selectfont] at (8,125) {1\enspace Offline initialization};
\node[anchor=west,text=tpRed,font=\sffamily\bfseries\fontsize{9}{10}\selectfont] at (159,125) {2\enspace Select, generate, and update};
\pic[scale=.82] at (21,94) {tppool};
\pic[scale=.82] at (71,94) {tpmodel=tpBlue};
\pic[scale=.82] at (119,94) {tpverify};
\draw[tparrow] (35,94)--(56,94);
\draw[tparrow] (85,94)--(108,94);
\node[font=\sffamily\fontsize{7.8}{9}\selectfont] at (21,75) {Pool};
\node[font=\sffamily\fontsize{7.8}{9}\selectfont] at (71,75) {Anchor};
\node[font=\sffamily\fontsize{7.8}{9}\selectfont] at (119,75) {Verify};
\draw[tparrow,draw=tpBlue] (119,66)--(119,59)--(75,59)--(75,52);
\node[text=tpBlue] at (71,44) {Verified success rate $\hat\varphi(x)$};
\node[text=tpMuted,font=\sffamily\fontsize{8}{9}\selectfont] at (71,28) {Initialize $(\alpha_x,\beta_x)$};
\node[text=tpMuted,font=\sffamily\itshape\fontsize{7.3}{8}\selectfont] at (71,11) {Computed once; reused across runs};
\draw[tparrow,draw=tpBlue] (126,29)--(157,29)--(157,88)--(166,88);
\pic at (184,88) {tpdensity};
\pic at (254,88) {tprank};
\pic at (327,88) {tpmodel=tpRed};
\pic[scale=.9] at (390,88) {tpgroup=3};
\pic[scale=.9] at (410,88) {tpgroup=5};
\pic at (476,88) {tpupdate};
\draw[tparrow] (203,88)--(236,88);
\draw[tparrow] (271,88)--(309,88);
\draw[tparrow] (344,88)--(376,88);
\draw[tparrow] (425,88)--(459,88);
\node[text=tpPurple] at (218,105) {$\mathbb E[U(p)]$};
\node[text=tpMuted,font=\sffamily\fontsize{7.3}{8}\selectfont] at (291,103) {select};
\node[text=tpGreen,font=\sffamily\fontsize{7.3}{8}\selectfont] at (360,106) {sample + verify};
\node[text=tpMuted,font=\sffamily\fontsize{7.3}{8}\selectfont] at (442,103) {advantages};
\node at (184,64) {Beta estimates};
\node at (254,64) {Top-$B$ prompts};
\node at (327,64) {Target policy};
\node at (400,64) {$G$ responses};
\node at (476,64) {GRPO};
\draw[tparrow,draw=tpRed] (476,56)--(476,46)--(327,46)--(327,56);
\node[text=tpRed,font=\sffamily\fontsize{8}{9}\selectfont] at (445,34) {Update policy $\theta$};
\draw[draw=tpPurple,line width=.8pt] (400,55)--(400,48);
\draw[tpfeedback] (400,44)--(400,15)--(184,15)--(184,55);
\node[text=tpPurple,fill=tpRedLight!40,inner sep=2pt] at (290,15) {$\alpha_x\!\gets\!\alpha_x+C_x,\quad\beta_x\!\gets\!\beta_x+G-C_x$};
\end{tikzpicture}
\caption{\method{} workflow. An offline anchor pass initializes per-prompt Beta estimates.
Online selection ranks their expected learnability; verified responses from the chosen prompts
update both the policy and the selected prompts' success/failure counts.}
\label{fig:framework}
\end{figure*}

\subsection{Prompt utility under GRPO}

Let $\mathcal X$ be a prompt pool, $\pi_\theta$ the current policy, and $v$ a binary verifier.
For a prompt $x$, write $p(x)=\Pr_{y\sim\pi_\theta(\cdot\mid x)}[v(y)=1]$ for its success rate.
GRPO generates $G$ independent responses $y_1,\ldots,y_G$ and computes group-relative
advantages from their rewards $r_i=v(y_i)$:
\begin{equation}
A_i=\frac{r_i-\bar r}{\operatorname{std}(r)+\varepsilon},
\qquad \bar r=\frac1G\sum_{j=1}^{G}r_j,
\label{eq:adv}
\end{equation}
using the sample standard deviation and $\varepsilon=10^{-4}$. If all responses receive the
same reward, every $A_i$ is zero. Such silent groups supply no reward-advantage signal;
our main experiments use this term without a reference-KL penalty.

At success rate $p$, the silent-group probability and its complement are
\begin{equation}
s(p)=p^G+(1-p)^G,\qquad U(p)=1-s(p).
\label{eq:learn}
\end{equation}
We call $U(p)$ the prompt's \emph{learnability}, the probability of obtaining both correct
and incorrect responses. The following result connects this probability to the learning
signal supplied by a prompt, motivating its use as the selection objective.

\begin{proposition}[Silent groups and advantage mass]
\label{prop:silent}
Let $G\ge2$ and $C\sim\mathrm{Binomial}(G,p)$. Then \textup{(i)} the group is silent exactly when
$C\in\{0,G\}$, with probability $s(p)$; \textup{(ii)} $U$ is strictly concave on $[0,1]$,
symmetric about $p=1/2$, strictly decreasing in $|p-1/2|$, and has maximum
$U(1/2)=1-2^{1-G}$, with zeros exactly at $p\in\{0,1\}$; \textup{(iii)} for a non-silent group,
$\sum_i A_i^2=G-1$ when $\varepsilon=0$, so
$\mathbb E[\sum_i A_i^2]=(G-1)U(p)$.
\end{proposition}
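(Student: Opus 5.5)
I will take the three parts in order; each reduces to an elementary calculation.

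For (i), note that $C=\sum_i r_i$ counts the correct responses. If $C\in\{0,G\}$, all $r_i$ are equal, so $r_i=\bar r$ and every $A_i=0$. If $0<C<G$, then some $r_i\neq\bar r$ and $\operatorname{std}(r)>0$, so some $A_i\neq 0$. Hence the group is silent exactly when $C\in\{0,G\}$. Independence of the $G$ draws gives $\Pr[C=0]+\Pr[C=G]=(1-p)^G+p^G=s(p)$.

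For (ii), I will differentiate $U(p)=1-p^G-(1-p)^G$ directly. The second derivative is
\[
U''(p)=-G(G-1)\bigl(p^{G-2}+(1-p)^{G-2}\bigr).
\]
This is strictly negative on $[0,1]$ for $G\ge2$, since the two terms cannot vanish simultaneously. When $G=2$ it equals the constant $-4$. So $U$ is strictly concave.
\begin{itemize}
\item \emph{Symmetry.} The map $p\mapsto 1-p$ swaps the two terms, so $U(p)=U(1-p)$.
\item \emph{Maximum.} A strictly concave function that is symmetric about $1/2$ attains its unique maximum there: $U(1/2)=1-2\cdot2^{-G}=1-2^{1-G}$.
\item \emph{Monotonicity.} Strict concavity makes $U$ strictly increasing on $[0,1/2]$ and strictly decreasing on $[1/2,1]$. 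Combined with symmetry, $U$ is strictly decreasing in $|p-1/2|$.
\item \emph{Zeros.} $U(0)=U(1)=0$. For any interior $p$, strict concavity gives $U(p)>\min(U(0),U(1))=0$, so these are the only zeros.
\end{itemize}

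For (iii), take a non-silent group with $C=c$ and $0<c<G$, and set $\varepsilon=0$. Writing $\bar r=c/G$, we have
\[
\sum_i(r_i-\bar r)^2=\sum_i r_i^2-G\bar r^2=c-\frac{c^2}{G}.
\]
The sample variance is $\operatorname{std}(r)^2=\frac{1}{G-1}\sum_i(r_i-\bar r)^2$, which is positive. Therefore
\[
\sum_i A_i^2=\frac{\sum_i(r_i-\bar r)^2}{\operatorname{std}(r)^2}=G-1,
\]
independent of $c$.

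For the expectation, the sum $\sum_i A_i^2$ equals $G-1$ on the non-silent event and $0$ on the silent event (where all $A_i=0$ by (i)). Taking expectations gives $(G-1)\Pr[\text{non-silent}]=(G-1)U(p)$.

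The only point needing care is the convention in (iii). The identity uses the sample (Bessel-corrected) standard deviation; a population standard deviation would give $G$ instead of $G-1$. The paper specifies the sample standard deviation, so I will state this explicitly. I will also remark that for $\varepsilon>0$ the identity holds only approximately, which is why the statement restricts to $\varepsilon=0$.
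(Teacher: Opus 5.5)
Your proof is correct and follows the paper's argument essentially step for step. You use the same characterization of silence via $C\in\{0,G\}$, the same second-derivative computation for strict concavity, and the same observation that $\sum_i(r_i-\bar r)^2/\operatorname{std}(r)^2=G-1$ under the Bessel-corrected normalizer. The only difference is cosmetic and lies in (ii): you obtain the maximum, the monotonicity, and the positivity from strict concavity plus symmetry, whereas the paper reads them off from the sign of $U'(p)=G[(1-p)^{G-1}-p^{G-1}]$ and the inequality $p^G+(1-p)^G<p+(1-p)$. Note that your monotonicity step also relies on the maximizer $1/2$ established in your previous bullet, since strict concavity alone would not give it.
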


The proof is in Appendix~A. Under exact normalization, a group with one correct response
and a group with evenly split rewards both contribute $G-1$ to the squared-advantage sum.
Difficulty changes how often a prompt supplies such a group. For $G=8$, this probability is
$U(1/2)=0.992$, compared with $U(0.05)=U(0.95)=0.337$, motivating selection around the
policy's intermediate success-rate region.

\subsection{Selection with uncertain success rates}

The policy's true success rate is unknown. We represent it by a per-prompt estimate
$\mathrm{Beta}(\alpha_x,\beta_x)$, whose mean estimates the success rate and whose total
pseudo-count mass controls concentration. Averaging the utility in Eq.~\eqref{eq:learn}
under this estimate gives
\begin{equation}
\begin{split}
U_\beta(x)
&=\mathbb E_{p\sim\mathrm{Beta}(\alpha_x,\beta_x)}[U(p)]\\
&=1-\frac{(\alpha_x)_G}{(\alpha_x+\beta_x)_G}
    -\frac{(\beta_x)_G}{(\alpha_x+\beta_x)_G},
\end{split}
\label{eq:ubeta}
\end{equation}
where $(a)_G=\prod_{j=0}^{G-1}(a+j)$ is the rising factorial. The Beta moment identity
(Appendix~A, Lemma~\ref{lem:beta}) gives this closed form, which requires only the two stored counts
and the group size.

Evaluating $U$ at the estimated mean would discard uncertainty about that mean. The next
result explains why the posterior expectation makes a different selection decision.

\begin{proposition}[Posterior concentration]
\label{prop:disp}
Let $\alpha_x,\beta_x>0$, $\mu=\alpha_x/(\alpha_x+\beta_x)\in(0,1)$, and
$m=\alpha_x+\beta_x$. For every finite $m$, $U_\beta(x)<U(\mu)$. At fixed $\mu$,
$U_\beta(x)$ is strictly increasing in $m$, tending to $0$ as $m\to0$ and to $U(\mu)$
as $m\to\infty$.
\end{proposition}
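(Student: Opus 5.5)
The first claim, $U_\beta(x)<U(\mu)$, follows directly from Jensen's inequality. By Proposition~\ref{prop:silent}(ii), $U$ is strictly concave on $[0,1]$. For finite $m$, $\mathrm{Beta}(\alpha_x,\beta_x)$ is non-degenerate, with variance $\mu(1-\mu)/(m+1)>0$. Strict Jensen therefore gives $\mathbb E[U(p)]<U(\mathbb E p)=U(\mu)$.

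For the dependence on $m$, we fix $\mu$ and write $\alpha=\mu m$, $\beta=(1-\mu)m$. Using the closed form \eqref{eq:ubeta},
\[
\frac{(\alpha)_G}{(m)_G}=\prod_{j=0}^{G-1}\frac{\mu m+j}{m+j},
\qquad
\frac{(\beta)_G}{(m)_G}=\prod_{j=0}^{G-1}\frac{(1-\mu) m+j}{m+j}.
\]

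\textbf{Monotonicity.} For $j=0$ the first factor is $\mu$, which is constant in $m$. For each $j\ge1$, the factor $f_j(m)=(\mu m+j)/(m+j)$ has derivative
\[
f_j'(m)=\frac{\mu(m+j)-(\mu m+j)}{(m+j)^2}=\frac{-j(1-\mu)}{(m+j)^2}<0 .
\]
So $f_j$ is positive and strictly decreasing, and hence so is the product, because $G\ge2$ guarantees at least one such factor. The same argument with $1-\mu$ in place of $\mu$ handles the second ratio. Therefore the silent probability $s_\beta=1-U_\beta$ is strictly decreasing in $m$, and $U_\beta$ is strictly increasing.

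\textbf{Limits.} As $m\to\infty$, each $f_j\to\mu$, so the first ratio tends to $\mu^G$ and the second to $(1-\mu)^G$. This gives $U_\beta\to1-\mu^G-(1-\mu)^G=U(\mu)$. Alternatively, this follows from weak convergence of the Beta distribution to $\delta_\mu$ and continuity of $U$.

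As $m\to0$, each factor with $j\ge1$ tends to $j/j=1$. The first ratio therefore tends to $\mu$ and the second to $1-\mu$, so $U_\beta\to1-\mu-(1-\mu)=0$. Intuitively, the Beta distribution concentrates its mass at the endpoints $\{0,1\}$, with weights $1-\mu$ and $\mu$, and $U$ vanishes there.

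\textbf{Main obstacle.} The only delicate point is justifying the closed form \eqref{eq:ubeta}, namely the moment identity $\mathbb E[p^G]=(\alpha)_G/(\alpha+\beta)_G$. This is Lemma~\ref{lem:beta}, which we assume; it follows from $B(\alpha+G,\beta)/B(\alpha,\beta)$ and $\Gamma(a+G)/\Gamma(a)=(a)_G$.

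The strictness claims need only care that $\mu\in(0,1)$ and that $G\ge2$ supplies at least one factor with $j\ge1$. This requirement is inherited from Proposition~\ref{prop:silent}; for $G=1$ we would have $U\equiv0$ and the strict statements would fail. Everything else is routine differentiation of rational factors.
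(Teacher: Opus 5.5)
Your proof is correct and follows the paper's argument essentially step for step: strict Jensen for the plug-in comparison, the factorization $U_\beta=1-F(m)-\tilde F(m)$ into factors $(\mu m+j)/(m+j)$, where the $j=0$ factor is constant and every $j\ge1$ factor is strictly decreasing, and the same factorwise limits as $m\to0$ and $m\to\infty$. The only cosmetic difference is that you differentiate each factor directly, whereas the paper differentiates $\log F$ term by term.
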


The proof in Appendix~A uses the strict concavity of $U$ and the Beta moments. At equal
mean success rates, the posterior score favors the prompt whose difficulty is better known.
A diffuse estimate can have an intermediate mean while still assigning substantial
probability to success rates near zero or one, where silent groups are likely. This separates
useful variation in a prompt's responses from uncertainty about its success rate, and is
why we use $\mathbb E[U(p)]$ to rank prompts rather than $U(\mathbb E[p])$.

Applying Proposition~\ref{prop:silent} under exact normalization gives
\begin{equation}
\mathbb E_{p,r}\!\left[\sum_i A_i^2\right]=(G-1)U_\beta(x).
\label{eq:posterior-mass}
\end{equation}
For a fixed group size and prompt-batch budget, selecting the $B$ highest-scoring prompts
maximizes this expected squared-advantage sum for the next batch. We break score ties by
pool order. Policy optimization combines these advantages with the generated responses'
score-function gradients; the experiments evaluate the resulting task performance.

\subsection{Initializing and updating the estimates}

The score requires initial information about each prompt. The following result shows why
an uninformative prior cannot provide this ranking.

\begin{samepage}
\begin{proposition}[Cold-start initialization]
\label{prop:cold}
Under $\alpha_x=\beta_x=1$, the posterior expected learnability is
$U_\beta(x)=1-2/(G+1)$ for every prompt. More generally, a prompt-independent initialization
makes $U_\beta$ constant over the pool.
\end{proposition}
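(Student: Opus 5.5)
The plan is to substitute $\alpha_x=\beta_x=1$ directly into the closed form of Eq.~\eqref{eq:ubeta}, which rests on the Beta moment identity of Lemma~\ref{lem:beta}. With $\alpha_x=1$ the rising factorial is $(1)_G=G!$, and with $\alpha_x+\beta_x=2$ it is $(2)_G=(G+1)!$. Each of the two subtracted ratios is therefore $G!/(G+1)!=1/(G+1)$, so
\[
U_\beta(x)=1-\frac{2}{G+1}.
\]
As an independent check that avoids the rising-factorial bookkeeping, $\mathrm{Beta}(1,1)$ is the uniform distribution on $[0,1]$. Integrating the definition of $U$ in Eq.~\eqref{eq:learn} term by term gives
\[
\int_0^1 \bigl(1-p^G-(1-p)^G\bigr)\,dp = 1-\tfrac{1}{G+1}-\tfrac{1}{G+1}.
\]
This expression involves only $G$, so it takes the same value for every prompt $x$.

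For the general claim, I would make precise what a prompt-independent initialization means: there are constants $\alpha_0,\beta_0>0$ with $(\alpha_x,\beta_x)=(\alpha_0,\beta_0)$ for all $x\in\mathcal X$. By Eq.~\eqref{eq:ubeta}, $U_\beta(x)$ is a function only of $(\alpha_x,\beta_x)$ and $G$, with no other dependence on $x$. Substituting the common pair therefore gives the same value
\[
1-\frac{(\alpha_0)_G+(\beta_0)_G}{(\alpha_0+\beta_0)_G}
\]
for every prompt, so $U_\beta$ is constant over the pool.

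More abstractly, if $p$ is given any prior distribution that does not depend on $x$, then $\mathbb E[U(p)]$ is a functional of that distribution alone, and the constancy holds even outside the Beta family. The consequence for selection is that every prompt ties, so top-$B$ selection reduces to the tie-breaking rule, pool order, and the score carries no difficulty information.

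I do not expect a genuine obstacle here. The only step needing care is the rising-factorial evaluation, namely that $(2)_G=2\cdot3\cdots(G+1)=(G+1)!$; the uniform-integral check above confirms it.
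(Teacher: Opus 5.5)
Your proof is correct and follows the paper's own argument. The paper likewise evaluates Lemma~\ref{lem:beta} at $\alpha_x=\beta_x=1$, noting the posterior is uniform and computing $(1)_G/(2)_G=G!/(G+1)!=1/(G+1)$ for each term. It then gets the general claim by observing that Eq.~\eqref{eq:ubeta} depends only on the shared pair $(\alpha,\beta)$ and $G$, so your direct uniform integral and your extension to arbitrary prompt-independent priors are harmless additions.
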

\end{samepage}

The proof is in Appendix~A. Changing shared pseudo-counts changes every prompt's score in
the same way; it cannot identify which prompts will yield reward variation. Initial estimates
must therefore depend on the prompts themselves. We obtain this information from an external
anchor, whose verified responses are available before any target-policy rollout.

\paragraph{Offline initialization.}
A smaller instruction-tuned anchor $T$ generates $k$ responses per prompt. The training
verifier gives its empirical success rate:
\begin{equation}
\hat\varphi(x)=\frac1k\sum_{j=1}^{k}v(\tilde y_j),
\qquad \tilde y_j\sim T(\cdot\mid x).
\label{eq:prior}
\end{equation}
We convert this estimate into initial Beta parameters:
\begin{equation}
\alpha_x=\kappa\hat\varphi(x)+\epsilon_0,\qquad
\beta_x=\kappa\bigl(1-\hat\varphi(x)\bigr)+\epsilon_0.
\label{eq:init}
\end{equation}
Here $\kappa$ sets the weight of the external observations, and $\epsilon_0>0$ permits
initialization when all anchor responses are correct or all are incorrect. We use $k=16$,
$\kappa=4$, and $\epsilon_0=10^{-3}$. The anchor measurements thus contribute approximately
four pseudo-observations, discounting them because they describe a different policy. The
prior strength also determines the initial concentration $m_0=\kappa+2\epsilon_0$, linking
its effect on early selection to Proposition~\ref{prop:disp} as well as to its weight in
subsequent updates.

Proposition~\ref{prop:silent} clarifies what must transfer from the anchor: its rates should
place useful prompts near the target policy's learnable region. Preserving a global
easy-to-hard ordering can be insufficient if the success-rate scale shifts across models.
We examine this distinction through the alternative-prior comparisons.

\paragraph{Online updates.}
For each selected prompt, the policy generates $G$ responses and the verifier counts the
correct responses $C_x$. We update
\begin{equation}
\alpha_x\leftarrow\alpha_x+C_x,\qquad
\beta_x\leftarrow\beta_x+G-C_x.
\label{eq:update}
\end{equation}
Only selected prompts receive new observations; the other counts are carried forward.
The trainer then performs a GRPO update on the generated responses. Prompts remain in the
pool and can be selected again. Algorithm~\ref{alg:tp} gives the complete procedure.

Each observed group changes both the estimated mean and concentration. Using the notation
of Proposition~\ref{prop:disp}, Eq.~\eqref{eq:update} gives
\begin{equation}
\mu'=\frac{m\mu+C_x}{m+G},\qquad m'=m+G.
\label{eq:mean-update}
\end{equation}
The fixed-mean concentration effect is thus only part of the online evolution. Repeatedly
solving a prompt moves its mean toward one and eventually lowers its probability of a
mixed-reward group, even as evidence accumulates. The selector can then shift toward other
prompts. The estimate pools observations from successive policy checkpoints, so it reflects
the prompt's training history as well as the latest group's outcome.

\begin{algorithm}[t]
\caption{\method{} prompt selection}
\label{alg:tp}
\begin{algorithmic}[1]
\REQUIRE pool $\mathcal X$, policy $\pi_\theta$, anchor $T$, verifier $v$, group size $G$, batch size $B$, horizon $T_{\max}$
\STATE \textbf{Offline initialization:} $k=16$, $\kappa=4$, $\epsilon_0=10^{-3}$
\FORALL{$x\in\mathcal X$}
  \STATE Generate and verify $k$ anchor responses
  \STATE Compute $\hat\varphi(x)$ using Eq.~\eqref{eq:prior}
  \STATE Initialize $(\alpha_x,\beta_x)$ using Eq.~\eqref{eq:init}
\ENDFOR
\STATE \textbf{Training:}
\FOR{$t=1$ \TO $T_{\max}$}
  \STATE Score every prompt using Eq.~\eqref{eq:ubeta}
  \STATE $\mathcal B_t\leftarrow$ top-$B$ prompts; ties by pool order
  \FORALL{$x\in\mathcal B_t$}
    \STATE Generate $G$ responses from $\pi_\theta(\cdot\mid x)$
    \STATE Verify responses and count successes $C_x$
    \STATE Update $(\alpha_x,\beta_x)$ using Eq.~\eqref{eq:update}
  \ENDFOR
  \STATE Update $\pi_\theta$ with GRPO on the sampled responses
\ENDFOR
\end{algorithmic}
\end{algorithm}

\subsection{Rollout budgets and dynamic sampling}

The basic procedure generates $BG$ responses per step. With this fixed budget, better selection
increases the proportion of responses belonging to non-silent groups. To reduce the number of
responses generated for a fixed useful update batch, we combine the selector with DAPO's
dynamic sampling \cite{yu2025dapo}. In this composition, candidate prompts come from the
highest-scoring $4B$ prompts; the trainer retains its existing filtering and refill procedure
and uses $B$ non-silent groups for each update. Improving the candidate pool reduces the need
to generate and discard silent groups. Table~\ref{tab:money} reports the measured generation
cost of this composition.

For $N$ prompts, the offline pass requires $Nk$ anchor generations, reusable across runs.
Online scoring takes $O(NG)$ scalar operations and stores two counts per prompt. Anchor
preparation and policy generation are accounted for separately.

\section{Experimental Setup}

\paragraph{Models and data.}
We train Qwen2.5-Math-7B (base) \cite{yang2024qwen25math} on $250$ problems from the MATH
training split \cite{hendrycks2021math}, covering its five difficulty levels and seven
subjects. Training uses GRPO with LoRA for $60$ steps, selecting $B=8$ prompts and generating
$G=8$ responses per prompt in each fixed-budget step. We evaluate overall and Level-5
accuracy on the disjoint MATH500 subset \cite{lightman2024verify} every $10$ steps, and
final-checkpoint accuracy on GSM8K \cite{cobbe2021gsm8k}, Minerva Math
\cite{lewkowycz2022minerva}, and OlympiadBench \cite{he2024olympiadbench}. Further evaluations
use Qwen2.5-7B and a $1.5$B policy \cite{qwen2024qwen25}, a $1200$-prompt pool, and longer
horizons. Appendix~F specifies optimization, decoding, and hardware.

\paragraph{Priors and selection rules.}
Qwen2.5-3B-Instruct is the default anchor in Eq.~\eqref{eq:prior}. Alternative priors use a
2PL item-response-theory bank fitted to a model--prompt response matrix, or reasoning
lengths from Qwen3-0.6B \cite{yang2025qwen3} mapped to success rates by a logistic fit.
We compare uniform sampling, MoPPS \cite{qu2025mopps}, GRESO \cite{zheng2025greso}, and DAPO
\cite{yu2025dapo} in a shared trainer. Model, optimizer, LoRA settings, horizon, and evaluation
are held fixed. The DAPO comparisons share a four-round refill cap and retain $B$ non-silent
groups per update.

The component comparisons use \textbf{online-NP}, which initializes every prompt at
$\mathrm{Beta}(1,1)$ and otherwise follows the same score and online update, and
\textbf{prior-only}, which samples from the fixed top half of the initial ranking.
\textbf{Length-prior}, \textbf{2PL bank}, and \textbf{bank-sharp} replace the initialization
and retain online feedback. Online-NP changes the initial pseudo-count mass from about
$4$ to $2$ as well as removing prompt-specific information; prior-only also changes sampling
within the selected band.

\paragraph{Metrics and run sets.}
\textbf{Silent@10} is the fraction of generated candidate groups with uniform rewards during
the first ten update steps; silent@200 uses the same cumulative definition over the longer
horizon. \textbf{Waste@30} counts responses in silent candidate groups through step~$30$.
The denominator includes every generated candidate group, including groups later filtered
by dynamic sampling. We report generation separately from the responses retained for an
update, so refill overhead is included in the DAPO comparison.

The main comparison averages three runs per configuration. The expanded \method{}--online-NP
comparison uses sixteen runs per method, with the same offline prior reused across runs.
Table~\ref{tab:sig} reports uncertainty and statistical tests; Appendix~F records the additional
run sets and timing measurements.

\begin{table*}[t]
\centering
\fontsize{9}{10.4}\selectfont
\setlength{\tabcolsep}{4.6pt}
\renewcommand{\arraystretch}{1.0}
\begin{tabular}{@{}lcccccccc@{}}
\toprule
\multirow{2}{*}{Method} & \multirow{2}{*}{Generated R/step} &
\multicolumn{2}{c}{Rollout efficiency \da} &
\multicolumn{5}{c}{Accuracy \ua} \\
\cmidrule(lr){3-4}\cmidrule(lr){5-9}
 & & silent@10 & waste@30 & MATH500 & Level-5 & GSM8K & Minerva & Olympiad \\
\midrule
\rowcolor{cGroup}
\multicolumn{9}{l}{\textsc{No selection}}\\
random (uniform) & $64$ & $0.379$\dn{17.5} & $749$\dn{440} & $0.545$ & $0.256$ & $0.785$ & $0.134$ & $0.149$ \\
\addlinespace[1.5pt]
\rowcolor{cGroup}
\multicolumn{9}{l}{\textsc{Online selection methods}}\\
MoPPS \cite{qu2025mopps} & $64$ & $0.425$\dn{22.1} & $621$\dn{312} & $0.561$ & $0.289$ & $0.817$ & $0.178$ & $0.179$ \\
GRESO \cite{zheng2025greso} & $64$ & $0.413$\dn{20.8} & $701$\dn{392} & $0.525$ & $0.236$ & $0.787$ & $0.148$ & $0.142$ \\
DAPO \cite{yu2025dapo} & ${\sim}138$ & $0.362$\dn{15.8} & $1565$\dn{1256} & $0.605$ & $0.333$ & $0.847$ & $0.168$ & $0.244$ \\
\addlinespace[1.5pt]
\rowcolor{cGroup}
\multicolumn{9}{l}{\textsc{Initialization and update variants}}\\
online-NP (posterior only) & $64$ & $0.204$ & $309$ & $0.528$ & $0.249$ & $0.815$ & $0.126$ & $0.163$ \\
prior-only (frozen) & $64$ & $0.246$\dn{4.2} & $413$\dn{104} & $0.533$ & $0.241$ & $0.788$ & $0.145$ & $0.154$ \\
length prior & $64$ & $\numLsil$\dn{1.3} & $\numLwst$\dn{243} & $\numLacc$ & $0.294$ & $0.836$ & $0.164$ & $0.204$ \\
\addlinespace[1.5pt]
\rowcolor{cGroup}
\multicolumn{9}{l}{\textsc{Verified offline priors}}\\
\rowcolor{cOurs}
\textbf{\method{} (ours)} & $64$ & $\best{0.138}$\up{6.7} & $\best{280}$\up{29} & $0.561$ & $0.303$ & $0.817$ & $0.150$ & $0.199$ \\
\rowcolor{cOursLite}
2PL bank (offline IRT) & $64$ & $\best{0.125}$\up{7.9} & $\best{235}$\up{74} & $0.593$ & $0.306$ & $0.854$ & $0.130$ & $0.227$ \\
\rowcolor{cOursLite}
2PL bank + sharpening & $64$ & $\best{0.167}$\up{3.8} & $\best{312}$\dn{3} & $0.582$ & $0.313$ & $0.831$ & $0.184$ & $0.209$ \\
\addlinespace[1.5pt]
\rowcolor{cGroup}
\multicolumn{9}{l}{\textsc{Combination with dynamic sampling}}\\
\rowcolor{cOurs}
\textbf{\method{} + DAPO} & ${\sim}123$ & $0.158$\up{4.6} & $541$\dn{232} & $0.605$ & $0.336$ & $0.829$ & $0.157$ & $0.200$ \\
\bottomrule
\end{tabular}
\caption{Prompt selection on Qwen2.5-Math-7B. Entries are means over three runs;
accuracy values are fractions. Generated R/step counts all candidate responses, including
those filtered by dynamic sampling. Subscripts show differences from online-NP, in percentage
points for silent@10 and response counts for waste@30; green indicates a reduction and red
an increase. Among fixed-budget methods, bold highlights the best efficiency values and
methods with overlapping observed run ranges.}
\label{tab:main}
\end{table*}

\section{Results}

\subsection{Early rollout utilization}

\begin{figure}[b]
\centering
\includegraphics[width=\columnwidth]{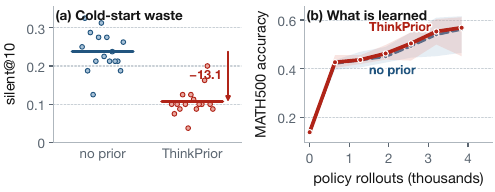}
\caption{Training behavior of \method{} and online-NP. (a) Early silent-group fractions
for individual runs, with horizontal marks indicating the means. (b) MATH500 learning
curves as a function of generated policy rollouts. Both methods use sixteen runs.}
\label{fig:l5}
\end{figure}

The verified priors yield the lowest early silent-group fractions among the fixed-budget
configurations in Table~\ref{tab:main}. \method{} improves on uniform sampling and the
history-based selectors, while the IRT priors achieve similar or lower waste. The common
feature is an initial estimate that distinguishes prompts before target-policy observations
accumulate.

The expanded comparison with online-NP keeps the score and update rule fixed while
replacing the uninformative initialization. It shows a
$55\%$ reduction in early silent groups (Table~\ref{tab:sig}). Figure~\ref{fig:l5}(a) displays
the separation across individual runs; panel (b) shows closely tracking MATH500 learning
curves at the same generated-response budget. Final performance on the additional reasoning
benchmarks remains competitive (Table~\ref{tab:main}). Together, the utilization and learning
curves show that useful early groups can be obtained more reliably while preserving task
performance.

\begin{table}[b]
\centering
\fontsize{8}{9.2}\selectfont
\setlength{\tabcolsep}{1.6pt}
\renewcommand{\arraystretch}{1.02}
\begin{tabular}{@{}lcccc@{}}
\toprule
Method & silent@10 \da & waste@30 \da & MATH500 \ua & Level-5 \ua \\
\midrule
\rowcolor{cOurs}
\textbf{\method{}} & $\best{0.106}{\pm}0.036$ & $\best{266}{\pm}30$ &
$0.569{\pm}0.037$ & $0.306{\pm}0.037$ \\
online-NP & $0.238{\pm}0.051$ & $329{\pm}77$ & $0.562{\pm}0.042$ & $0.293{\pm}0.046$ \\
\midrule
$\Delta$ & $-13.1$\,pts & $-63$ & $+0.68$\,pts & $+1.26$\,pts \\
$95\%$ CI & $[-16.4,-9.9]$ & $[-106,-20]$ & $[-2.2,3.5]$ & $[-1.8,4.3]$ \\
Cohen's $d$ & $-2.95$ & $-1.07$ & $0.17$ & $0.30$ \\
Welch $p$ & ${<}10^{-4}$ & $0.007$ & $0.63$ & $0.40$ \\
Perm. $p$ & ${<}10^{-4}$ & $0.002$ & $0.32$ & $0.21$ \\
\addlinespace[2pt]
Ckpts led & -- & -- & $6/6$ & $3/6$ \\
\bottomrule
\end{tabular}
\caption{Detailed comparison of \method{} and online-NP over sixteen runs per method.
Values are means $\pm$ sample standard deviations; $\Delta$ is \method{} minus online-NP.
Confidence intervals and Welch tests are two-sided; permutation tests are one-sided with
$4\times10^5$ resamples. Differences and tests use unrounded values. Bold highlights the
efficiency improvements whose intervals exclude zero. Ckpts led counts the checkpoints at
which \method{} has higher mean accuracy, out of six evaluations.}
\label{tab:sig}
\end{table}

\subsection{Reducing generation with dynamic sampling}

DAPO converts the improvement in candidate selection into a reduction in generated responses.
With its useful update batch held fixed, \method{} uses $10.6\%$ fewer policy rollouts and
matches the baseline's mean MATH500 accuracy (Table~\ref{tab:money}). Most of the reduction
comes from avoiding silent candidates during refill. The increase in non-silent candidates
also leaves more responses available than the trainer needs for its update.

This distinction matters for interpreting the cost columns. A final refill can supply excess
non-silent groups, so the generated count includes responses that are neither silent nor used
for optimization. We retain all of them in the accounting. The measured generation counts
therefore reflect the actual sampling work performed by the shared refill procedure, with
the same number of responses used for policy updates in both arms. The corresponding
generation-time estimates and the separate anchor-preparation cost are detailed in Appendix~F.

\begin{table}[t]
\centering
\fontsize{8}{9.2}\selectfont
\setlength{\tabcolsep}{3.6pt}
\renewcommand{\arraystretch}{1.0}
\begin{tabular*}{\columnwidth}{@{\extracolsep{\fill}}lcc>{\columncolor{cOurs}}c@{}}
\toprule
Metric & DAPO & \multicolumn{2}{c}{$+$\,\method{}} \\
\cmidrule(l){3-4}
& & value & change \\
\midrule
MATH500 \ua & $0.605$ & $0.605$ & $0.0$\,pts \\
\addlinespace[3pt]
Generated rollouts \da & $8256$ & $7381$ & $-10.6\%$ \\
Silent generated rollouts \da & $3325$ & $2131$ & $-35.9\%$ \\
Non-silent generated rollouts \ua & $4931$ & $5251$ & $+6.5\%$ \\
Rollouts used for updates & $3840$ & $3840$ & $0$ \\
Estimated generation GPU-hours \da & $1.87$ & $1.68$ & $-0.20$ \\
\addlinespace[3pt]
silent@10 \da & $0.362$ & $0.158$ & $-20.4$\,pts \\
waste@30 \da & $1565$ & $541$ & $-65.4\%$ \\
\bottomrule
\end{tabular*}
\caption{Generation cost of DAPO with and without \method{}. Values are means over
three $60$-step runs. Both methods use $3840$ responses for updates; candidate counts include
all generated responses, including filtered groups and excess groups from the final refill.
Generation time is estimated at $0.817$ seconds per response. Means are rounded independently.}
\label{tab:money}
\end{table}

\paragraph{Amortizing the anchor pass.}
The main-pool probe uses $4000$ anchor responses, while the DAPO comparison saves about
$875$ policy responses per run. Their costs depend on different models, so the response
counts alone do not determine the preparation overhead. Let $C_A$ denote the one-time
anchor-generation cost, $c_\pi$ the average cost per policy response, and $J$ the number of
runs reusing the prior. Under the per-response accounting of Table~\ref{tab:money}, the
policy-generation savings cover the probe when
\begin{equation}
\frac{C_A}{J}<\Delta R\,c_\pi,
\label{eq:amortization}
\end{equation}
where $\Delta R$ is the mean reduction in generated policy responses. A single run bears
the full preparation cost; repeated runs on the same pool share it. Anchor choice
consequently matters through both its preparation cost and its ranking quality.

\subsection{Across policies and training horizons}

The early reduction also appears with the general-purpose Qwen2.5-7B backbone
(Table~\ref{tab:gen}(a)), extending the result beyond the math-specialized policy.
Table~\ref{tab:gen} separates these policy comparisons from changes in anchor size, longer
training, and a larger prompt pool. The larger-pool runs reproduce the initial advantage;
their later trajectories test how far this benefit persists as the policy and the available
training prompts evolve. We examine that behavior below.

\begin{table}[t]
\centering
\fontsize{8}{9.2}\selectfont
\setlength{\tabcolsep}{1.5pt}
\renewcommand{\arraystretch}{1.02}
\begin{tabular}{lcccc}
\toprule
\rowcolor{cGroup}
\multicolumn{5}{l}{\textbf{(a) Policy backbone and size}}\\
\multirow{2}{*}{Setting} & \multicolumn{2}{c}{Accuracy \ua} & \multicolumn{2}{c}{silent@10 \da}\\
\cmidrule(lr){2-3}\cmidrule(lr){4-5}
 & \method{} & NP & \method{} & NP\\
\midrule
Qwen2.5-Math-7B (main) & $0.577$ & $0.527$ & \best{0.113} & $0.271$\\
Qwen2.5-7B (general)   & $0.556$ & $0.575$ & \best{0.067} & $0.142$\\
Qwen2.5-1.5B policy    & $0.289$ & $0.263$ & -- & --\\
Qwen2.5-Math-7B (rerun) & $0.586$ & $0.582$ & \best{0.125} & $0.208$\\
\midrule
\rowcolor{cGroup}
\multicolumn{5}{l}{\textbf{(b) Anchor models}}\\
\addlinespace[1pt]
Anchor $T$ & Size & $\overline{\hat\varphi}$ & MATH500 & silent@10 \da\\
\midrule
Qwen2.5-Instruct   & $1.5$B & $0.172$ & $0.587$ & $0.106$\\
\rowcolor{cOurs}
Qwen2.5-Instruct   & $3$B   & $0.252$ & $0.577$ & $0.113$\\
Qwen2.5-Math-Inst. & $7$B   & $0.276$ & $0.528$ & $0.100$\\
\midrule
\rowcolor{cGroup}
\multicolumn{5}{l}{\textbf{(c) Exploratory accuracy-drop events at step $300$}}\\
\addlinespace[1pt]
Selection rule & \multicolumn{2}{c}{no KL} & \multicolumn{2}{c}{$\beta{=}0.04$}\\
\midrule
\rowcolor{cOurs}
\method{} & \multicolumn{2}{c}{\best{0/6}} & \multicolumn{2}{c}{\best{0/4}}\\
online-NP & \multicolumn{2}{c}{$4/6$} & \multicolumn{2}{c}{$2/4$}\\
\midrule
\rowcolor{cGroup}
\multicolumn{5}{l}{\textbf{(d) Training with a $1200$-prompt pool}}\\
\addlinespace[1pt]
Pool $=1200$ & \multicolumn{2}{c}{wasted \da} & silent@10 \da & silent@200 \da\\
\midrule
\rowcolor{cOurs}
\method{} & \multicolumn{2}{c}{\lpOurWaste} & \best{\lpOurSilentA} & \lpOurSilentB\\
online-NP & \multicolumn{2}{c}{$\lpNpWaste$} & $\lpNpSilentA$ & $\lpNpSilentB$\\
\bottomrule
\end{tabular}
\caption{Additional evaluations of \method{}; NP denotes online-NP. (a) Policy backbone and size.
(b) Anchor models. (c) Exploratory counts of peak-to-final accuracy drops exceeding ten
percentage points at step $300$, with and without reference KL. (d) A $1200$-prompt pool
trained for $200$ steps, with cumulative wasted-response counts. Panels (a,d) average three
runs and (b) averages two; (c) reports event counts. Training settings and event definitions
are given in Appendices D--F.}
\label{tab:gen}
\end{table}

\section{Analysis}

\subsection{How the ranking evolves}

The trajectories in Figure~\ref{fig:eff} separate the effects of initialization and accumulated
feedback. Uniform sampling continues to generate silent groups at a similar rate. Online-NP
improves as observations distinguish prompts, whereas \method{} starts with a useful ranking.
The frozen-prior comparison in Table~\ref{tab:main} also improves on uniform sampling,
showing that the external scores contain usable information before online adaptation.
Combining this initialization with feedback yields the strongest early efficiency among
these component configurations.

On the small pool, the gap subsequently narrows and \method{}'s cumulative full-run waste
slightly exceeds online-NP's. An initially informative prompt may become consistently
solvable, while its estimate still aggregates earlier policy outcomes. The score changes
with both its mean and concentration, as Eq.~\eqref{eq:mean-update} shows; increasing evidence
alone does not keep a prompt useful. The observed trajectories establish an early selection
benefit and motivate evaluating it over longer horizons. They also explain the role of DAPO
in the composition: filtering removes remaining silent groups before an update, even when
the prior-informed ranking admits them.

\begin{figure}[b]
\centering
\includegraphics[width=\columnwidth]{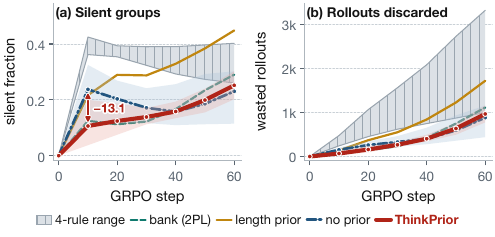}
\caption{Rollout utilization over training. Left: silent-group fractions in successive
training windows. Right: cumulative responses belonging to silent groups. \method{}
and online-NP curves average sixteen runs; the other methods use three runs. Shaded
regions show observed ranges.}
\label{fig:eff}
\end{figure}

\subsection{What information transfers from the anchor?}

Larger anchors do not markedly improve the early silent-group rate in Table~\ref{tab:gen}(b).
The smallest anchor tested already identifies useful candidates. To understand what this
signal must capture, we compare the priors' rankings with reference success rates over the
whole pool and within the learnable band, defined by $U(p)>0.8$. Reference rates use
$32$ samples per prompt and serve only to characterize the pool; the selection rules do
not access them.

\begin{table}[t]
\centering
\small
\begin{tabular}{lcc}
\toprule
prior & all $250$ & band ($n{=}76$) \\
\midrule
anchor (ours)   & $0.62$ & $0.38$ \\
2PL bank        & $0.76$ & $0.45$ \\
anchor length   & $0.27$ & $0.25$ \\
\bottomrule
\end{tabular}
\caption{Spearman correlation with the reference pass rate, over the whole pool and inside the
learnable band.}
\label{tab:band}
\end{table}

The IRT bank has a clear advantage in global rank correlation, but that advantage shrinks
inside the learnable band (Table~\ref{tab:band}). Ordering very easy and very hard prompts
accurately contributes to the global statistic without necessarily changing which candidates
are selected for an update. This is consistent with the anchor and bank priors achieving
similar early utilization despite different global rankings. The comparison identifies a
useful target for difficulty estimation: resolving the region where the selector operates.
An anchor also obtains its estimates directly on a new pool, whereas an IRT fit requires a
model--prompt response matrix.

The length prior provides a complementary test. It has lower construction cost and lower
global calibration error, yet selects less useful initial batches. Its main signal is whether
the anchor reaches the generation limit; among completed traces, length adds little
discrimination. This separates a coarse indication of difficulty from the information needed
to locate informative prompts. Appendix~B reports the calibration, truncation, and AUC
measurements. Verified success rates provide the more effective selection signal in this
comparison.

\paragraph{Coverage of online feedback.}
Only selected prompts can correct their estimates. An anchor score near either extreme can
place an otherwise useful prompt low in the ranking; until it is sampled, its score retains
that initialization. Selected prompts, meanwhile, accumulate the concentration rewarded by
Proposition~\ref{prop:disp}. The finite prior strength controls how readily new observations
can revise an error, but does not itself cause an overlooked prompt to be sampled.

This affects a substantial part of the reference learnable band: $21$ of its $76$ prompts
receive an anchor score of either zero or one, putting their initial $U_\beta$ near its floor.
The errors occur at both ends of the success-rate scale, so a prompt can be overlooked
because the anchor finds it either too easy or too hard. Appendix~B gives the breakdown.

The recorded trace nevertheless visits $69$ band prompts by step~$30$. Combining these
counts implies that at least $14$ of the initially extreme-scored band prompts have been
selected. The trace therefore shows that initially low-ranked prompts can enter the training
batches as scores evolve. The remaining prompts may be delayed or persistently excluded;
these aggregate counts do not distinguish the two outcomes.

\subsection{Persistence over longer training}

The larger pool retains the early advantage and reverses the mean full-run waste ordering
seen on the small pool: \method{} now wastes fewer responses overall. Late outcomes overlap
across runs, so pool enlargement extends the observed pattern without identifying which
combination of prompt coverage and policy change drives it. This distinction matters because
top-$B$ selection permits revisits; the number of training steps alone does not establish
whether the pool has been exhausted.

Longer trajectories also show large evaluation-accuracy declines in some online-NP runs.
Table~\ref{tab:gen}(c) reports the exploratory relative-drop counts, with and without a
reference-KL term. These observations broaden the evaluation beyond cold start, while the
endpoint definitions and per-run results in Appendices~D and~E determine how the comparisons
should be interpreted. The initial reduction in silent groups is the pattern reproduced
across the tested policies and pool sizes.

\section{Related Work}

\subsection{Efficient RLVR}

RLVR learns from automatically verified outcomes \cite{lambert2024tulu3,guo2025deepseekr1},
and selective rollout methods seek to make better use of the resulting training budget.
DAPO \cite{yu2025dapo} builds update batches from groups with reward variation. GRESO
\cite{zheng2025greso} predicts persistent silent outcomes from reward history while retaining
an exploration probability; MoPPS \cite{qu2025mopps} uses posterior sampling, and online
difficulty filtering \cite{bae2025odf} prioritizes intermediate success rates.

PCL \cite{gao2025pcl} learns a prompt curriculum, while GPS \cite{qu2026gps} shares
predictions across prompts and combines them with history-aware selection. sGPO
\cite{sudalairaj2026sgpo} obtains its initial profile by running the target policy on the
pool. \method{} supplies external observations before this target-policy history is
available, then incorporates online outcomes into its selection score.

\subsection{Difficulty Estimation}

Item response theory models item difficulty and discrimination \cite{lord1980irt} and has
been used for NLP difficulty estimation and compact evaluation
\cite{lalor2016irt,lalor2019irt,polo2024tiny}. It provides a way to estimate prompt properties
from a model--prompt response matrix. Reasoning traces \cite{wei2022cot} and test-time
scaling \cite{snell2024scaling} offer signals based on the computational effort associated
with a problem. Our bank and length-prior comparisons examine whether these difficulty
estimates identify the region likely to produce mixed-reward groups.

\subsection{Curricula and Data Valuation}

Curriculum learning orders examples by difficulty or learning progress
\cite{bengio2009curriculum,kumar2010spl,graves2017acl}. Prioritized replay and active learning
allocate computation according to estimated example value
\cite{schaul2016per,jiang2021plr,settles2009active}. For language models, importance resampling,
filtering, and influence estimates support pretraining and instruction-data selection
\cite{xie2023dsir,li2024superfiltering,xia2024less}. RHO-LOSS \cite{mindermann2022rholoss}
uses a smaller reference model to prioritize useful examples. \method{} applies external
information to a group-level objective: the probability that a prompt produces a relative
reward signal during policy training.

\FloatBarrier
\section{Conclusion}

\method{} reduces the cost of obtaining useful rollout groups at cold start. Verified
external-model responses initialize a posterior score grounded in GRPO's relative reward
signal, and training outcomes refine the estimates. The experiments show improved early
utilization and fewer generated responses when combined with DAPO, while retaining comparable
task performance. The prior comparisons further show why difficulty estimates should be
assessed in the region where selection operates, alongside global calibration or ranking.

Future work should examine the interaction between prompt coverage and policy drift, and
evaluate the approach across domains, model families, and full-parameter training.
The preprint and supporting figures are available at
\url{https://shatianming5.github.io/thinkprior/}.

\clearpage

\bibliography{references}

\clearpage
\appendix

\section{Appendix A: Proofs}

Throughout, $G\ge2$ is the group size, $v(\cdot)\in\{0,1\}$ the verifier, $r_i=v(y_i)$ the reward
of the $i$-th rollout in a group, $C=\sum_{i=1}^G r_i$, and $p$ the prompt's pass rate under the
sampling policy, so that $C\sim\mathrm{Binomial}(G,p)$. We write $\bar r=C/G$ and use the sample
standard deviation $\operatorname{std}(r)^2=\frac{1}{G-1}\sum_i(r_i-\bar r)^2$, matching the
implementation. All statements are for the exact normalizer, i.e.\ in the limit
$\varepsilon\to0$ of Eq.~\eqref{eq:adv}. Only Proposition~\ref{prop:silent}(iii) depends on
$\varepsilon$ at all: the finite normalizer scales it by $(1+\varepsilon\sqrt{G})^{-2}$ in the
worst case $C\in\{1,G-1\}$, so at $G=8$ and $\varepsilon=10^{-4}$ it reads $6.996$ rather than
$7$. Every other identity here is exact in $\varepsilon$.

\subsection{Proof of Proposition~\ref{prop:silent}}

\paragraph{(i) Silence and its probability.}
$A_i=0$ for all $i$ if and only if $r_i=\bar r$ for all $i$, which for binary $r_i$ holds if and
only if $C\in\{0,G\}$. The two events are disjoint, with $\Pr[C=0]=(1-p)^G$ and
$\Pr[C=G]=p^G$, so $\Pr[\text{silent}]=p^G+(1-p)^G=s(p)$ and $\Pr[\text{not silent}]=U(p)$.

\paragraph{(ii) Shape of $U$.}
Differentiating Eq.~\eqref{eq:learn} twice gives, for $p\in(0,1)$,
\[
U'(p)=G\left[(1-p)^{G-1}-p^{\,G-1}\right],
\]
\[
U''(p)=-G(G-1)\left[p^{\,G-2}+(1-p)^{G-2}\right].
\]
Since $G\ge2$, the bracket in $U''$ is strictly positive on $(0,1)$, so $U''<0$ and $U$ is
strictly concave. Exchanging $p$ and $1-p$ leaves Eq.~\eqref{eq:learn} unchanged, so
$U(p)=U(1-p)$ and $U$ is symmetric about $p=1/2$. For $p<1/2$ we have $1-p>p\ge0$ and
$G-1\ge1$, hence $(1-p)^{G-1}>p^{G-1}$ and $U'(p)>0$; by symmetry $U'(p)<0$ for $p>1/2$.
Therefore $U$ is strictly increasing on $[0,1/2]$, strictly decreasing on $[1/2,1]$, and so is a
strictly decreasing function of $|p-1/2|$, with a unique maximum at $p=1/2$ of value
$U(1/2)=1-2\cdot2^{-G}=1-2^{1-G}$. Finally $U(0)=U(1)=0$, while for $p\in(0,1)$ both $p<1$ and
$1-p<1$ give $p^G+(1-p)^G<p+(1-p)=1$, so $U>0$; the zeros are exactly $\{0,1\}$.

\paragraph{(iii) Advantage mass.}
Condition on the group not being silent, so $1\le C\le G-1$. With $\bar r=C/G$,
\[
\sum_{i=1}^G (r_i-\bar r)^2
= C\Bigl(1-\tfrac{C}{G}\Bigr)^{2}+(G-C)\Bigl(\tfrac{C}{G}\Bigr)^{2}
= \frac{C(G-C)}{G},
\]
because $C(G-C)^2+(G-C)C^2=C(G-C)\bigl[(G-C)+C\bigr]=C(G-C)G$. Hence
$\operatorname{std}(r)^2=\frac{C(G-C)}{G(G-1)}>0$ and
\[
\sum_{i=1}^G A_i^2=\frac{\sum_i (r_i-\bar r)^2}{\operatorname{std}(r)^2}=G-1,
\]
which does not depend on $C$. Every non-silent group therefore carries the same total squared
advantage, and combining with (i),
$\mathbb{E}\bigl[\sum_i A_i^2\bigr]=(G-1)\Pr[\text{not silent}]=(G-1)\,U(p)$. $\square$

\medskip
\noindent Maximizing $U(p)$ therefore maximizes the expected squared-advantage sum.
The policy gradient additionally weights each advantage by the sampled response's
score-function gradient, whose direction and magnitude depend on the response.

\subsection{A closed form for posterior expected learnability}

\begin{lemma}\label{lem:beta}
If $p\sim\mathrm{Beta}(\alpha,\beta)$ with $\alpha,\beta>0$, then for integer $G\ge1$
\[
\mathbb{E}\bigl[p^{\,G}\bigr]=\frac{(\alpha)_G}{(\alpha+\beta)_G},
\quad (a)_G:=\prod_{j=0}^{G-1}(a+j).
\]
\end{lemma}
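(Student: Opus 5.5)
The plan is to compute the moment directly from the Beta density and then simplify with the functional equation of the Gamma function. Write the density of $\mathrm{Beta}(\alpha,\beta)$ as $p^{\alpha-1}(1-p)^{\beta-1}/B(\alpha,\beta)$ on $(0,1)$, where $B(\alpha,\beta)=\Gamma(\alpha)\Gamma(\beta)/\Gamma(\alpha+\beta)$. Multiplying by $p^G$ gives an integrand of the same Beta form with first parameter shifted from $\alpha$ to $\alpha+G$. Since $\alpha+G>0$ and $\beta>0$, the integral converges, and we obtain
\[
\mathbb{E}\bigl[p^{\,G}\bigr]=\frac{B(\alpha+G,\beta)}{B(\alpha,\beta)}
=\frac{\Gamma(\alpha+G)\,\Gamma(\alpha+\beta)}{\Gamma(\alpha)\,\Gamma(\alpha+\beta+G)}.
\]

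Next, iterate $\Gamma(a+1)=a\Gamma(a)$ a total of $G$ times, which is valid for any $a>0$. This yields $\Gamma(a+G)/\Gamma(a)=\prod_{j=0}^{G-1}(a+j)=(a)_G$. Applying this identity with $a=\alpha$ and again with $a=\alpha+\beta$ turns the ratio above into $(\alpha)_G/(\alpha+\beta)_G$, which is the claimed formula.

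An alternative that avoids Gamma functions altogether is induction on $G$. The case $G=0$ is trivial. For the step, note that $p^{G}\cdot p$ integrated against the $\mathrm{Beta}(\alpha,\beta)$ density equals $\mathbb{E}_{\mathrm{Beta}(\alpha+G,\beta)}[p]$ times the normalizing ratio, and the mean of a Beta distribution is $\alpha'/(\alpha'+\beta)$. This gives the recursion $\mathbb{E}[p^{G+1}]=\mathbb{E}[p^G]\cdot(\alpha+G)/(\alpha+\beta+G)$, whose solution is the rising-factorial ratio.

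No step poses a real obstacle. The only point requiring care is confirming convergence of the shifted Beta integral and the validity of the Gamma recursion for non-integer positive arguments, and both follow from $\alpha,\beta>0$. For later use in Eq.~\eqref{eq:ubeta}, observe that by the symmetry $p\mapsto 1-p$, under which $1-p\sim\mathrm{Beta}(\beta,\alpha)$, the same lemma gives $\mathbb{E}[(1-p)^G]=(\beta)_G/(\alpha+\beta)_G$.
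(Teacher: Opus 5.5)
Your proof is correct and follows the paper's own argument: write $\mathbb{E}[p^G]$ as $B(\alpha+G,\beta)/B(\alpha,\beta)$, expand into Gamma functions, and use $\Gamma(a+G)/\Gamma(a)=(a)_G$ to obtain the rising-factorial ratio. Your inductive alternative and your remark that $1-p\sim\mathrm{Beta}(\beta,\alpha)$ gives $\mathbb{E}[(1-p)^G]$ are also sound, and the latter is how the paper obtains the closed-form score.
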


\begin{proof}
Writing $B(\cdot,\cdot)$ for the Beta function,
\[
\mathbb{E}\bigl[p^{G}\bigr]
=\frac{1}{B(\alpha,\beta)}\int_0^1\! p^{\alpha+G-1}(1-p)^{\beta-1} dp
\]
equals $B(\alpha+G,\beta)/B(\alpha,\beta)$, and expanding both Beta functions in Gamma functions
gives
$\frac{\Gamma(\alpha+G)\Gamma(\alpha+\beta)}{\Gamma(\alpha)\Gamma(\alpha+\beta+G)}
=\frac{(\alpha)_G}{(\alpha+\beta)_G}$.
\end{proof}

\noindent Applying Lemma~\ref{lem:beta} to $U(p)=1-p^G-(1-p)^G$ and using that
$1-p\sim\mathrm{Beta}(\beta,\alpha)$ yields Eq.~\eqref{eq:ubeta} directly.

\subsection{Proof of Proposition~\ref{prop:disp}}

Fix $\mu=\alpha_x/(\alpha_x+\beta_x)\in(0,1)$ and $m=\alpha_x+\beta_x\in(0,\infty)$, so that
$\alpha_x=\mu m$ and $\beta_x=(1-\mu)m$. Define
\[
F(m)=\prod_{j=0}^{G-1}\frac{\mu m+j}{m+j},
\qquad
\tilde F(m)=\prod_{j=0}^{G-1}\frac{(1-\mu)m+j}{m+j},
\]
so that $U_\beta=1-F(m)-\tilde F(m)$ by Lemma~\ref{lem:beta}.

\paragraph{Strict inequality against the plug-in score.}
$U$ is strictly concave by Proposition~\ref{prop:silent}(ii) and the posterior is
non-degenerate whenever $m<\infty$, so Jensen's inequality is strict:
$U_\beta=\mathbb{E}[U(p)]<U(\mathbb{E}[p])=U(\mu)$. Scoring by $U_\beta$ therefore
penalizes dispersion relative to evaluating $U$ at the posterior mean.

\paragraph{Monotonicity in the concentration $m$.}
Differentiating the logarithm term by term,
\[
\frac{d}{dm}\log F(m)=\sum_{j=0}^{G-1}\left[\frac{\mu}{\mu m+j}-\frac{1}{m+j}\right].
\]
The $j=0$ term is $\frac{\mu}{\mu m}-\frac{1}{m}=0$. For $j\ge1$,
\[
\frac{\mu}{\mu m+j}-\frac{1}{m+j}
=\frac{j(\mu-1)}{(\mu m+j)(m+j)}<0
\]
since $\mu<1$. As $G\ge2$ there is at least one such term, so $\frac{d}{dm}\log F<0$ and $F$ is
strictly decreasing in $m$. Replacing $\mu$ by $1-\mu$, which is also in $(0,1)$, shows
$\tilde F$ is strictly decreasing in $m$ as well. Hence $U_\beta=1-F-\tilde F$ is strictly
increasing in $m$.

\paragraph{Limits.}
As $m\to\infty$ each factor $\frac{\mu m+j}{m+j}\to\mu$, so $F\to\mu^G$ and
$\tilde F\to(1-\mu)^G$, giving $U_\beta\to U(\mu)$: the posterior concentrates and the score
recovers the plug-in value. As $m\to0$ the $j=0$ factor equals $\mu$ exactly while every
$j\ge1$ factor tends to $1$, so $F\to\mu$ and $\tilde F\to 1-\mu$, whence
$U_\beta\to1-\mu-(1-\mu)=0$. $\square$

\medskip
\noindent At a fixed mean, a more concentrated posterior gives a higher expected
learnability. Our initialization sets the concentration to $\kappa+2\epsilon_0\approx4$
at an anchor-informed mean, while the uninformative ablation uses concentration $2$
and mean $1/2$. The experimental comparison thus varies both the initial mean and
concentration, as discussed with the ablation results.

\subsection{Proof of Proposition~\ref{prop:cold}}

Under $\alpha_x=\beta_x=1$ the posterior is uniform on $[0,1]$, and Lemma~\ref{lem:beta} gives
$\mathbb{E}[p^G]=\frac{(1)_G}{(2)_G}=\frac{G!}{(G+1)!}=\frac{1}{G+1}$, and likewise
$\mathbb{E}[(1-p)^G]=\frac{1}{G+1}$. Hence
\[
U_\beta(x)=1-\frac{2}{G+1}\qquad\text{for every }x,
\]
which at $G{=}8$ equals $7/9\approx0.778$. More generally, if the initialization assigns the same
pair $(\alpha,\beta)$ to every prompt, then Eq.~\eqref{eq:ubeta} evaluates to the same number for
every prompt, so $U_\beta$ is constant on the pool and the arg-max is determined entirely by the
tie-breaking order. $\square$

\section{Appendix B: Extended analysis}

\paragraph{Training-window statistics.}
Uniform sampling maintains a silent-group rate of $37.5\%$--$40.8\%$ across windows.
Online-NP reaches $10.7\%$ after twenty to thirty steps, and its gap from \method{} narrows
by about step~$40$ on the small pool. Full-run cumulative waste is $968$ responses for
\method{} and $885$ for online-NP; late training pass rates are $0.858$ and $0.841$,
respectively. These trajectories distinguish the initial selection benefit from full-run
utilization.

\paragraph{Calibration and the length signal.}
The full-pool calibration MAE is $0.12$ for the length prior and $0.18$ for the verified
anchor prior. The length-prior comparison uses $800$ generations, compared with $4000$
for the reusable verified probe. The logistic length fit obtains most of its signal from
truncation: the anchor reaches its $2560$-token limit on $155$ of the $250$ prompts. Those
prompts have mean reference pass rate $0.112$, compared with $0.205$ for the remaining $95$.
Among the completed traces, length has chance-level AUC. The truncation indicator alone
reaches AUC $0.651$, reciprocal length reaches $0.665$, and the $k=16$ verified probe reaches
$0.915$. The two priors also differ in anchor size and sample count.

\paragraph{Ranking within the learnable region.}
The pool covers MATH levels and subjects through spread sampling. Under the reference pass
rates, $76$ of the $250$ prompts satisfy $U(p)>0.8$. Table~\ref{tab:band} compares rank
correlations on this subset and on the whole pool. The bank's advantage over the anchor
shrinks from $0.14$ globally to $0.07$ inside the band. Thus, much of the global ranking
advantage concerns prompts outside the region most likely to supply mixed rewards.

\paragraph{Feedback coverage.}
The anchor assigns $\hat\varphi=0$ to $99$ prompts and $\hat\varphi=1$ to $19$. Of these,
$8$ and $13$, respectively, belong to the reference learnable band. In total, $21$ of the
$76$ band prompts therefore begin near the floor of $U_\beta$. The prior strength $\kappa=4$
limits the weight of these estimates once a prompt receives feedback, but top-$B$ selection
provides no explicit exploration or diversity term \cite{qu2026gps}. The recorded trace
visits $69$ band prompts by step~$30$. For the remaining prompts, the trace does not distinguish
delayed selection from continued exclusion.

\section{Appendix C: Limitations}

The experiments cover mathematical reasoning with one model family and LoRA. The larger-pool
and longer-horizon studies broaden this setting, while uniformly sampled pools, other domains,
and full-parameter training remain directions for evaluation. The small-pool experiment uses
$60$ steps, the longer trajectories extend to $300$, and the larger pool uses $1200$ prompts
for $200$ steps.

The expanded comparison supports the waste reductions; its accuracy differences remain
unresolved. The offline prior is held fixed across runs, so reported intervals capture
training variability conditional on that prior. The main multi-method comparison has three
runs per configuration.

External measurements are not refreshed during training. Online counts combine outcomes from
successive policy checkpoints and update only selected prompts, leaving initially
underestimated prompts dependent on whether they are revisited. The experiments do not
isolate the separate contributions of coverage and policy drift to the closing gap.

A shared exact-match verifier scores anchor responses, training rewards, and evaluation.
Its errors can consequently affect these stages in the same direction.

\section{Appendix D: Behaviour at $300$ steps}

We extend \method{} and online-NP to $300$ steps, each with and without a reference-KL term
at $\beta=0.04$.

\paragraph{Endpoint definitions.}
The original endpoint, fixed before running, was final MATH500 accuracy below $0.05$.
The KL-free comparison records $2/6$ events for online-NP and $0/6$ for \method{}
(Fisher two-sided $p=0.455$). After inspecting the trajectories, we additionally counted
peak-to-final drops exceeding $10$ percentage points. This includes online-NP runs ending
at $0.098$ and $0.290$ after drops of $52$ and $35$ points, giving $4/6$ versus $0/6$
($p=0.061$). Table~\ref{tab:gen}(c) reports this exploratory relative-drop endpoint.

\paragraph{Reference KL.}
Under the exploratory endpoint, online-NP records $4/6$ events without KL and $2/4$ with KL
(Fisher two-sided $p=1.00$); \method{} records zero in both settings. The zero-advantage
identities in Proposition~\ref{prop:silent} remain valid with reference KL, while the total
loss can receive an additional gradient from that term.

\paragraph{Interpretation.}
The original endpoint provides the planned comparison. Pooling the KL conditions gives
$6/10$ versus $0/10$ and $p=0.011$ under the exploratory endpoint, but combines runs designed
for different comparisons. Both this pooling and the relative-drop analysis are post hoc.
The twenty trajectories motivate further study of longer-horizon behavior rather than an
estimate of a component-specific stability effect.

\paragraph{Execution platforms.}
Seed-to-host assignments are identical across arms: seeds $0$ and $3$ run on one machine,
and the others on the second. KL-free online-NP events occur on both machines, at seed
indices $1/2$ and $3/4$.

\section{Appendix E: A larger prompt pool}
\label{app:bigpool}

\paragraph{Design.}
We rebuild the MATH pool using the same spread-sampling rule, increasing it from $250$ to
$1200$ prompts ($4.8\times$), and run \method{} and online-NP for $200$ steps.
The anchor pass is repeated on the full pool with Qwen2.5-3B-Instruct, $k=16$, and the same
verifier and decoding. Since top-$B$ selection permits revisits, $1200/8=150$ and
$250/8\approx31$ are coverage bounds only under sampling without replacement; the horizon
does not establish pool exhaustion.

Only the anchor pass rates are required by these two arms. Reference pass rates and
per-prompt reasoning lengths were not remeasured for the additional $950$ prompts, so the
oracle, length-prior, and bank configurations were not run. Their unused fields retain
placeholders. Generation length is a global trainer constant, and the reference pass rate
is read only by the oracle rule. Learnable-band and headroom statistics are therefore
reported only for the original pool.

\paragraph{Prior coverage and consistency.}
The prior file covers $1200/1200$ prompt identifiers with no extras; all runs record
$\texttt{miss}=0$. The trainer's default $\hat\varphi=0.5$ for missing entries is therefore
unused. All runs share a host and start at MATH500 accuracy $0.126$. On the original $250$
prompts, the repeated anchor pass has mean score difference $-0.014$, with $119/250$ scores
identical, consistent with sampling variation at $k=16$.

\paragraph{Rollout utilization.}
At step~$10$, cumulative silent-group fractions are $\lpNpSilentA$ for online-NP and
$\lpOurSilentA$ for \method{}, closely matching $0.238$ and $0.106$ on the small pool.
The run ranges do not overlap at steps~$10$ ($\lpNpSilentASeeds$ versus
$\lpOurSilentASeeds$) or~$30$ ($\lpNpSilentCSeeds$ versus $\lpOurSilentCSeeds$).
At step~$200$, the mean fractions are $\lpNpSilentB$ and $\lpOurSilentB$.
Over $12800$ generated responses, \method{} wastes $\lpOurWaste$ compared with online-NP's
$\lpNpWaste$, a mean reduction of $\lpNetPct$. The corresponding late run ranges overlap:
waste is $\lpOurWasteSeeds$ versus $\lpNpWasteSeeds$, and silence is
$\lpOurSilentBSeeds$ versus $\lpNpSilentBSeeds$. Table~\ref{tab:gen}(d) reports the means.

\paragraph{Accuracy and sensitivity.}
The primary mean includes all three runs. One online-NP run reaches MATH500 accuracy $0.024$
and GSM8K, Minerva, and OlympiadBench accuracies of $0.006$, $0.000$, and $0.007$, despite
a training pass rate of $0.748$ in the same window. Its results remain in both the accuracy
and utilization summaries. The online-NP MATH500 values $\lpNpAccSeeds$ give a mean of
$\lpNpAcc$, compared with $\lpOurAcc$ for \method{}. Omitting the low run yields
$\lpNpAccSensitivity$, reported only as a post-hoc sensitivity calculation. The accuracy
comparison remains descriptive with this run-to-run variation.

\paragraph{Reporting rule.}
The comparison uses three seeds per arm. Before the runs finished, we specified disjoint
observed ranges at each reported endpoint as the criterion for support, with no seeds added
after inspecting outcomes and no significance test at $n=3$. Online-NP's unfinished seeds
needed to produce fewer non-silent responses than \method{}'s lowest count, $6976$.
Seed~$1$ finished at $6688$ and seed~$2$ at $7104$. Thus the early endpoints meet the
criterion, while step~$200$ remains a directional comparison of overlapping runs.

\section{Appendix F: Training and evaluation details}

\paragraph{Optimization and decoding.}
LoRA uses $r=32$, $\alpha=64$, and learning rate $3\!\times\!10^{-5}$. Rollouts use
temperature $0.9$, top-$p$ $1.0$, $512$ new tokens, and prompts truncated to $384$;
evaluation is greedy. The simplified GRPO objective uses one inner iteration, no reference
KL, no policy-ratio clipping, and gradients norm-clipped at $1.0$. It is evaluated on the
untempered log-likelihood without an importance ratio. All arms share this objective and
run on RTX-4090 class GPUs, with one job per card. Reference pass rates estimated from $32$
samples per prompt characterize the original pool and are not used for selection.

\paragraph{Baseline settings.}
The shared-trainer MoPPS and GRESO configurations start with $\alpha=\beta=1$, as does
online-NP. DAPO uses a four-round refill cap. Each update retains exactly $B$ non-silent
groups; surplus groups from the final refill remain included in the generated count.

\paragraph{Length prior.}
Qwen3-0.6B \cite{yang2025qwen3} makes one greedy pass with a $2560$-token cap. We measure
$\ell(x)$ between the think delimiters and fit a two-parameter logistic map to
$\hat\varphi$ on a random $10\%$ of the pool. The answer itself is not used to construct
this prior.

\paragraph{Run sets.}
Table~\ref{tab:main} averages three runs for each of eleven configurations ($33$ runs on one
machine). Table~\ref{tab:sig} expands \method{} and online-NP to sixteen runs each;
MATH500 ranges are $0.452$--$0.612$ and $0.466$--$0.616$, respectively. Prior-only ranges
from $0.476$ to $0.580$ in the main comparison. Policy and anchor comparisons in
Table~\ref{tab:gen}(a,b) come from an earlier run set, averaging three and two runs,
respectively. Its main-model row is reported separately from the primary comparison.

\paragraph{Generation timing and DAPO counts.}
Two sets of uncontended timing runs, of sizes three and two, give $0.822\pm0.039$ and
$0.808\pm0.020$ seconds per response. The pooled estimate, $0.817$ seconds, determines
Table~\ref{tab:money}'s generation-time estimates and excludes verification, policy updates,
and orchestration. DAPO's three runs have MATH500 accuracies $0.602/0.622/0.590$ and
$8192/8128/8448$ generated responses. With \method{}, these values are
$0.620/0.608/0.586$ and $7040/7872/7232$. Retaining eight groups per update gives $3840$
update-used responses in each sixty-step run.

\end{document}